\def\isarxiv{1}
\def\paperTitle{Visual Autoregressive Transformers Must Use $\Omega(n^2 d)$ Memory}
\def\paperAuthor{
Yang Cao\thanks{Wyoming Seminary.}
\and
Xiaoyu Li\thanks{University of New South Wales.}
\and
Yekun Ke
\and
Yingyu Liang\thanks{The University of Hong Kong.}
\and
Zhenmei Shi\thanks{University of Wisconsin-Madison.}
\and
Zhao Song\thanks{\texttt{magic.linuxkde@gmail.com}. Simons Institute for the Theory of Computing, University of California, Berkeley.}
}
\theoremstyle{plain}
\newtheorem{theorem}{Theorem}[section]
\newtheorem{lemma}[theorem]{Lemma}
\newtheorem{definition}[theorem]{Definition}
\newtheorem{remark}[theorem]{Remark}
\newtheorem{problem}[theorem]{Problem}
\renewcommand\cite\citep
\newcommand{\R}{\mathbb{R}}
\DeclareMathOperator{\VAR}{\mathsf{VAR}}
\DeclareMathOperator{\diag}{diag}
\begin{document}

\ifdefined\isarxiv

\date{}
\title{\paperTitle}
\author{\paperAuthor}

\else

\title{\paperTitle}

\author{Antiquus S.~Hippocampus, Natalia Cerebro \& Amelie P. Amygdale \thanks{ Use footnote for providing further information
about author (webpage, alternative address)---\emph{not} for acknowledging
funding agencies.  Funding acknowledgements go at the end of the paper.} \\
Department of Computer Science\\
Cranberry-Lemon University\\
Pittsburgh, PA 15213, USA \\
\texttt{\{hippo,brain,jen\}@cs.cranberry-lemon.edu} \\
\And
Ji Q. Ren \& Yevgeny LeNet \\
Department of Computational Neuroscience \\
University of the Witwatersrand \\
Joburg, South Africa \\
\texttt{\{robot,net\}@wits.ac.za} \\
\AND
Coauthor \\
Affiliation \\
Address \\
\texttt{email}
}

%

\newcommand{\fix}{\marginpar{FIX}}
\newcommand{\new}{\marginpar{NEW}}

\maketitle

\fi

\ifdefined\isarxiv
\begin{titlepage}
  \maketitle
  \begin{abstract}
    A fundamental challenge in Visual Autoregressive models is the substantial memory overhead required during inference to store previously generated representations. Despite various attempts to mitigate this issue through compression techniques, prior works have not explicitly formalized the problem of KV-cache compression in this context. In this work, we take the first step in formally defining the KV-cache compression problem for Visual Autoregressive transformers. We then establish a fundamental negative result, proving that any mechanism for sequential visual token generation under attention-based architectures must use at least $\Omega(n^2 d)$ memory, when $d = \Omega(\log n)$, where $n$ is the number of tokens generated and $d$ is the embedding dimensionality. This result demonstrates that achieving truly sub-quadratic memory usage is impossible without additional structural constraints. Our proof is constructed via a reduction from a computational lower bound problem, leveraging randomized embedding techniques inspired by dimensionality reduction principles. Finally, we discuss how sparsity priors on visual representations can influence memory efficiency, presenting both impossibility results and potential directions for mitigating memory overhead.

  \end{abstract}
  \thispagestyle{empty}
\end{titlepage}

{\hypersetup{linkcolor=black}
\tableofcontents
}
\newpage

\else

\begin{abstract}

\end{abstract}

\fi


\section{Introduction}
Visual generation technologies have become integral to a wide range of applications, spanning image enhancement \cite{lhc+25,gld+25}, augmented reality \cite{awt+24}, medical diagnostics \cite{akh+24,mhl+24,lll+24}, and creative fields like game development \cite{rhr+20, cgx+25}. By converting text descriptions and other inputs into rich, diverse visuals, these models are transforming both image interpretation and content creation. Key approaches in this domain include Variational AutoEncoders (VAE) \cite{doe16}, Generative Adversarial Networks (GAN) \cite{gpm+20}, and Diffusion models \cite{hja20}. Their ability to generate high-resolution, realistic, and varied imagery has expanded the possibilities of visual synthesis, enhancing fidelity, diversity, and overall quality.

Among these, Visual Autoregressive models have demonstrated remarkable generative capabilities, particularly in sequential token-based image synthesis. However, a fundamental challenge remains: the substantial memory overhead required to store previously generated representations during inference. Existing efforts to alleviate this issue largely rely on compression techniques, yet none have explicitly formalized the problem of KV-cache compression in the context of Visual Autoregressive transformers. One particularly compelling variant is the Visual AutoRegressive (VAR) Transformer \cite{tjy25} which adapts the transformer paradigm to structured image synthesis. By employing a coarse-to-fine “next-scale prediction” approach, VAR Transformers produce high-quality images more efficiently than many standard diffusion-based methods \cite{sme20}.

In this work, we aim to take the first step in rigorously defining KV-cache compression for such models and establish a fundamental lower bound on memory consumption. 

\begin{problem}\label{pbm:improve_var_kv-cache}
Is it possible to improve \cite{tjy25} to enhance the compression cost of KV-Cache on Visual Autoregressive Transformers?
\end{problem}
To Answer Problem~\ref{pbm:improve_var_kv-cache}, we first review some previous results of Visual Autoregressive Transformers in \cite{mzl+24,sjc+24,tjy25,kll+25b} and then formalize the KV-Cache compression problem in Visual Autoregressive Transformers. 

\begin{definition}[KV Cache Compress Problem in $\VAR$ Transformer, informal version of Definition~\ref{def:kv-cache_var:formal}]\label{def:kv-cache_var:informal} At the $i$-th iteration of $\VAR$ transformer, the model will output the $i$-th scale of feature map $Z_i$ with size $h_i \times w_i$ (See Definition~\ref{def:new_var_transformer}). In this process, the newly input is a $h_iw_i \times d$-dimension matrix triple $(Q_i,K_i,V_i) \in (\R^{d \times (h_i w_i)})^{3}$. After the triple entry, the Attention function computes:
\begin{align*}
    \mathsf{Attn}(Q_i,\mathsf{K}_i,\mathsf{V}_i):=\sigma_i(\mathsf{K}_i \cdot Q_i)^{\top} \cdot \mathsf{V}_i \in \R^{(h_i w_i) \times d}
\end{align*}
where:
\begin{align*}
\mathsf{K}_i = \begin{bmatrix}
    K_1^\top\\
    K_2^\top\\
    \vdots\\
    K_i^\top
\end{bmatrix},
\quad
\mathsf{V}_i = \begin{bmatrix}
    V_1^T\\
    V_2^T\\
    \vdots\\
    V_i^T
\end{bmatrix}
\end{align*}
are $\sum_{j=1}^i (h_j w_j) \times d$ matrices, and $\sigma_i:\R^{\sum_{j=1}^i (h_j w_j) \times h_i w_i} \to \R^{\sum_{j=1}^i (h_j w_j) \times h_i w_i} $ is the softmax function for each column. The $\mathsf{K}_i$ and $\mathsf{V}_i$ matrices are called the key-value (KV) cache.
\end{definition}

To the best of our knowledge, there are no formal results to support and describe such approaches in a comprehensive fashion. To bridge this gap, we propose the following question and develop a foundational theory to characterize the complexity of KV-cache compression problem for Visual Autoregressive Transformers.

\begin{problem}\label{pbm:sub-quadratic}
    What is the space complexity lower bound for KV-Cache in Visual Autoregressive transformer?
\end{problem}

In this work, we answer both Problem~\ref{pbm:improve_var_kv-cache} and Problem~\ref{pbm:sub-quadratic}, we prove that any mechanism for sequential visual token generation under attention-based architectures must use at least $\Omega(n^2 d)$ memory, when $d = \Omega(\log n)$, where $n$ represents the number of generated tokens and $d$ denotes the embedding dimensionality. 

\begin{theorem}[Space Complexity Lower Bounds for Key-Value Cache in Precise Attention Computation, informal version of Theorem~\ref{thm:space_lower_bound_precise:formal}]\label{thm:space_lower_bound_precise:informal}
Let $N$ denote the total number of feature map scales generated by $\VAR$ transformer, where the height and width at the $N$-th level satisfy $h_N = w_N = n$. There exists some universal constant $C_u > 1$ and for $d \geq C_u \log n$, any algorithm that can, with probability at least $\frac{9}{10}$ produce an output 
\begin{align*}
    o_N:= \mathsf{Attn}(Q_N, \mathsf{K}_N, \mathsf{V}_N) \in \R^{n^2 \times d}
\end{align*}
must use at least $\Omega(n^2 d)$ bits of memory.
\end{theorem}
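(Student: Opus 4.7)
The plan is to prove the lower bound by an encoding/indistinguishability reduction: I will build a family of $\VAR$-transformer inputs indexed by an arbitrary binary string $x\in\{\pm 1\}^{m}$ with $m=\Theta(n^2 d)$, such that from the attention output $o_N$ one can recover $x$ with probability at least $9/10$. Any algorithm that computes $o_N$ correctly with probability $9/10$ on this family must therefore have at least $\Omega(m)=\Omega(n^2 d)$ reachable memory states — otherwise two inputs $x\neq x'$ would share a memory state, leading by a pigeonhole / Yao-style argument to identical outputs and contradicting recoverability.

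The construction proceeds as follows. First I would neutralize the lower scales by setting $Q_j,K_j,V_j$ for $j<N$ to zero (or to keys placed in a direction orthogonal to all top-scale queries), so that the softmax output is driven by the $N$-th block alone, up to a known, data-independent additive bias that can be subtracted at decoding time. Next, I would encode $x$ directly into $V_N\in\R^{d\times n^2}$ as a $\pm 1$ matrix. The crux is to design $Q_N,K_N\in\R^{d\times n^2}$ so that the column-wise softmax of $\mathsf{K}_N Q_N$ acts as a near-identity on the $N$-th block: I pick $n^2$ nearly-orthogonal unit vectors $u_1,\dots,u_{n^2}\in\R^d$ satisfying $|\langle u_i,u_j\rangle|\leq 1/10$ for $i\neq j$, and set $K_N=[u_1\,\cdots\,u_{n^2}]$ together with $Q_N=\tau\cdot K_N$ for a temperature $\tau=\Theta(\log n)$. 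Existence of the $u_i$ is precisely where the hypothesis $d\geq C_u\log n$ enters: a standard Johnson–Lindenstrauss / Gaussian random projection argument shows that i.i.d.\ random unit vectors satisfy the near-orthogonality requirement with high probability once $d\geq C_u\log n$ for a sufficiently large constant $C_u$.

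With this choice, the diagonal entries of $\mathsf{K}_N Q_N$ restricted to the $N$-th block have magnitude $\tau$, while every off-diagonal entry has magnitude at most $\tau/10$. Hence in each column, the softmax places all but a $1/\poly(n)$ fraction of its mass on the matching key, so $o_N$ equals $V_N^\top$ up to $1/\poly(n)$ entrywise error. Since the encoded bits are $\pm 1$, this error lies comfortably below the rounding threshold, and a coordinate-wise sign extraction recovers $x$ exactly. Chained with the counting argument from the first paragraph, this yields the claimed $\Omega(n^2 d)$ memory lower bound.

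The main obstacle I anticipate is controlling the influence of the earlier-scale tokens $j<N$ on the softmax denominator: because $\sigma_N$ normalizes over all $\sum_{j\leq N}h_j w_j$ rows, even modest contributions from earlier blocks can spoil the near one-hot behavior required for clean decoding. Neutralizing these cleanly — while still presenting a valid $\VAR$ input and without distorting the $N$-th-block softmax by more than $1/\poly(n)$ — is the delicate part, and will likely require both the temperature tuning above and a careful choice of the earlier keys (for instance, assigning them very negative pre-softmax logits, or placing them in a direction exactly orthogonal to every $u_i$). The remaining pieces — JL concentration for the $u_i$, the rounding analysis, and the information-theoretic counting step — are standard once the softmax-concentration reduction is in place.
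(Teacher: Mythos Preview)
There is a genuine gap in where you place the information. Your construction encodes the entire secret $x$ in $V_N$, the \emph{final} input to the stream, while the first $N-1$ iterations are zeroed out and hence identical across your whole family. The algorithm's memory state just before iteration $N$ is therefore the same for every $x$, so your pigeonhole step (``two inputs $x\neq x'$ would share a memory state, leading to identical outputs'') does not go through: the future input still varies with $x$, and the algorithm can compute $o_N$ correctly once it receives $(Q_N,K_N,V_N)$, having cached nothing beforehand. Your construction thus proves nothing about the KV-cache, which is precisely the state carried \emph{between} iterations.

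The paper reverses the roles you assigned: it encodes $x\in\{0,1\}^{L\times d}$ (with $L=\Theta(n^2)$) into the $V$-values of the \emph{earlier} tokens, sets their keys to JL-projected near-orthonormal vectors, and takes $K_N=V_N=0$; the final query $Q_N$ is then used to read off selected entries of $x$ via exactly the softmax-concentration mechanism you describe. Formally this is a reduction from one-way \textsc{Multi-Index}: Alice streams the first $L$ triples and sends the algorithm's memory state as her message; Bob, who holds the index set, supplies $Q_N$ and decodes from $o_N$. The $\Omega(Ld)=\Omega(n^2 d)$ bound then follows from the communication lower bound for \textsc{Multi-Index}. Your JL and temperature-scaling ideas are correct and appear in the paper, but they must be applied to the \emph{earlier} keys so that the information the algorithm is forced to carry forward lives in the cache rather than in the last input.
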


This result indicates that achieving truly sub-quadratic memory usage is infeasible without imposing additional structural constraints. To support our findings, we construct a proof via reduction from a computational lower-bound problem, leveraging randomized embedding techniques inspired by dimensionality reduction principles. Furthermore, we explore the role of sparsity priors in visual representations, discussing both theoretical impossibility results and potential avenues for mitigating memory overhead. By formally characterizing these constraints, our work provides a foundational step toward designing more memory-efficient Visual Autoregressive models.

\subsection{Our Contributions}
We summarize our contributions as follows: 
\begin{itemize}
\item We formally define the KV-cache compression problem in Visual Autoregressive Transformers and highlight the lack of prior theoretical results in this domain.
\item We establish a fundamental space complexity lower bound of $\Omega(n^2 d)$ for KV-cache storage in Visual Autoregressive Transformers, proving that subquadratic memory usage is infeasible without structural constraints.
\item We develop a rigorous proof technique leveraging reductions from known computational lower-bound problems, providing a new perspective on memory efficiency in Visual Autoregressive transformers.
\end{itemize}

\subsection{Roadmap}
This paper is structured as follows: we begin with a review of related work in Section~\ref{sec:related_work} followed
by essential definitions and foundational concepts in Section~\ref{sec:pre}. In Section~\ref{sec:main_result} we present our lower bound result. Finally, we conclude in Section~\ref{sec:conclusion}. 
\section{Related Work}\label{sec:related_work}
This section briefly reviews the related research work on VAR, KV-Cache compression and theoretical limitations of transformer. These topics have a close connection to our work.

\subsection{Visual Autoregressive Models} Autoregressive modeling \cite{tma+23,bbc+23,aaa+23,gaa+24,lfx+24,mfw24,lcl+24,kgkk24,pnk+24,bzgw24}, rooted in NLP, employs sequential prediction where Transformers are pretrained to generate the next token. This paradigm was extended to vision via PixelCNN \cite{vke16}, which modeled pixel-level likelihoods using CNNs, and later VQGAN \cite{ero21,wyy+24}, which integrated autoregression into VQ-VAE’s compressed space for tractable probabilistic synthesis. LlamaGen \cite{sjc+24} further adapted next-token prediction from LLMs to image generation, showing that scaled autoregressive models like Llama \cite{gaa+24} can reach state-of-the-art performance without explicit visual priors. Recent work \cite{tjy25} advanced from token-wise to scale-wise autoregression, introducing coarse-to-fine next-scale prediction that surpasses diffusion models in scalability, speed, and quality. Our analysis of KV-cache compression in Visual Autoregressive models \cite{tjy25} provides new heuristic insights for this line of research.
More work on visual autoregressive models can be referred to \cite{kll+25a,kll+25b,cll+25,gkl+25,lss+25_high}

\subsection{KV Cache Compression}
Recent work has focused on compressing the KV cache in Transformer LLMs by exploiting structural or learned sparsity. \cite{zyt+23} proposed an eviction method via dynamic submodular maximization, while \cite{xtc+23} identified ``attention sinks''—early tokens with persistently high weights--suggesting a strategy of retaining these sinks with a sliding window of recent tokens. \cite{ldl+23} formalized the “persistence of importance” in dominant tokens, and \cite{lwd+23} highlighted the role of contextual sparsity. Beyond sparsity, \cite{zhmk24} leveraged clustering patterns in key representations for memory-efficient algorithms with theoretical space advantages. More recently, \cite{ho25} studied the lower bounds of KV-cache compression in autoregressive Transformers. Building on this, we extend the theoretical framework to Visual Autoregressive Transformers, offering the first systematic analysis of their compression challenges and novel theoretical insights.

\subsection{Theoretical Limitations of Transformer.} 
Advances in optimizing Transformer architectures have been accompanied by complexity-theoretic analyses clarifying fundamental constraints. For instance, computing the full attention matrix has a proven lower bound of $\Omega(n^2)$ time complexity, derived through fine-grained complexity analyses \cite{kwh23,as23}. These results highlight inherent computational bottlenecks in naive self-attention implementations, motivating the development of efficient approximations such as sparse and low-rank attention mechanisms. Parallel research extends these findings to attention gradient computations, establishing analogous hardness guarantees that impose fundamental limitations on backpropagation efficiency in Transformer training \cite{as24}. Furthermore, studies leveraging communication complexity and circuit theory frameworks have provided deeper insights into the computational boundaries of Transformer models, shedding light on their theoretical expressiveness and efficiency trade-offs \cite{ay24}. Beyond efficiency considerations, theoretical investigations have also explored the capabilities of Transformers in representation and generalization, analyzing their ability to capture hierarchical structures and encode long-range dependencies in data \cite{sht24}. 

Much of the relevant work in recent years has been related to this such as looped transformer~\citep{as24_rope,lss+24_loop_transformer,lss+24_relu,cls+24}, acceleration~\citep{hyw+23,cll+24,lls+24_prune,lls+24_dp_je,hwsl24,hcl+24,ccl+25,hlsl24,whl+24,xhh+24,hcw+24,szz24,whhl24,hwl24,cgl+25_homo}, graph attention~\citep{vcc+17,wjs+19,bay21} and other related works\citep{xsl24,lls+24_grok,ssz+25_dit,lssz24_dp,hsk+24}.
\section{Preliminary}\label{sec:pre}
In this section, we present some preliminary concepts and definitions of our paper. We first introduce the traditional KV-cache compression problem in Section~\ref{sec:trad_kv-cache}, then we provide the formalized preliminaries about Visual Autoregressive transformer in Section~\ref{sec:def_var}. We formalized the KV-cache compression problem in Visual Autoregressive transformers in Section~\ref{sec:var_kv-cache}.
\subsection{KV-Cache Compression Problem}\label{sec:trad_kv-cache}
Here we introduce the traditional KV-Cache compression problem.
\begin{definition}[KV Cache Compression Problem]\label{def:trad_kv-cache}
The input is a stream of triples $(q_i, k_i, v_i) \in (\R^d)^3$, where $d$ is the embedding dimension. After each stream entry, the Attention function is defined as:
\begin{align*}
    \mathrm{Attn}(q_i, K_i, V_i) := \sigma_i(K_i \cdot q_i)^T \cdot V_i \in \R^d,
\end{align*}
where
$
K_i = \begin{bmatrix}
    k_1 &
    k_2 &
    \cdots &
    k_i
\end{bmatrix}^\top,
V_i = \begin{bmatrix}
    v_1 &
    v_2 &
    \cdots &
    v_i
\end{bmatrix}^\top
$
are $i \times d$ matrices, and $\sigma_i : \R^i \to \R^i$ is the softmax function with support $[i]$. The $K_i$ and $V_i$ matrices are called the key-value (KV) cache.
\end{definition}
\begin{remark}
    The attention function can also be viewed as a collection of expected values under suitable softmax distributions. Let $D_i$ be the softmax distribution over $[n]$ corresponding to the values $q_i^T k_1, \ldots, q_i^T k_n$. Then we have
\begin{align*}
    \mathrm{Attn}(q_i, K_i, V_i) = \mathbb{E}_{\ell \sim D_i}[V_{\ell}].
\end{align*}
\end{remark}

\subsection{Visual Autoregressive Transformer}\label{sec:def_var}
In this section, we present the definition of the Visual Autoregressive transformer, which differs in its generation approach from conventional autoregressive transformers traditionally used for text generation. Specifically, the $\VAR$ model \cite{tjy25} uses the $\VAR$ Transformer to convert the initialized token map $X_{\mathrm{init}}$ into a series of pyramid-shaped feature maps, where each scale of the "pyramid" represents a feature map at a different resolution. The $\VAR$ Transformer alternates between interpolation layers and attention layers to get the output. 

First, we present the up-interpolation operation on a given scale of the feature map.

\begin{figure}[!ht]
    \centering
    \includegraphics[width=0.45\linewidth]{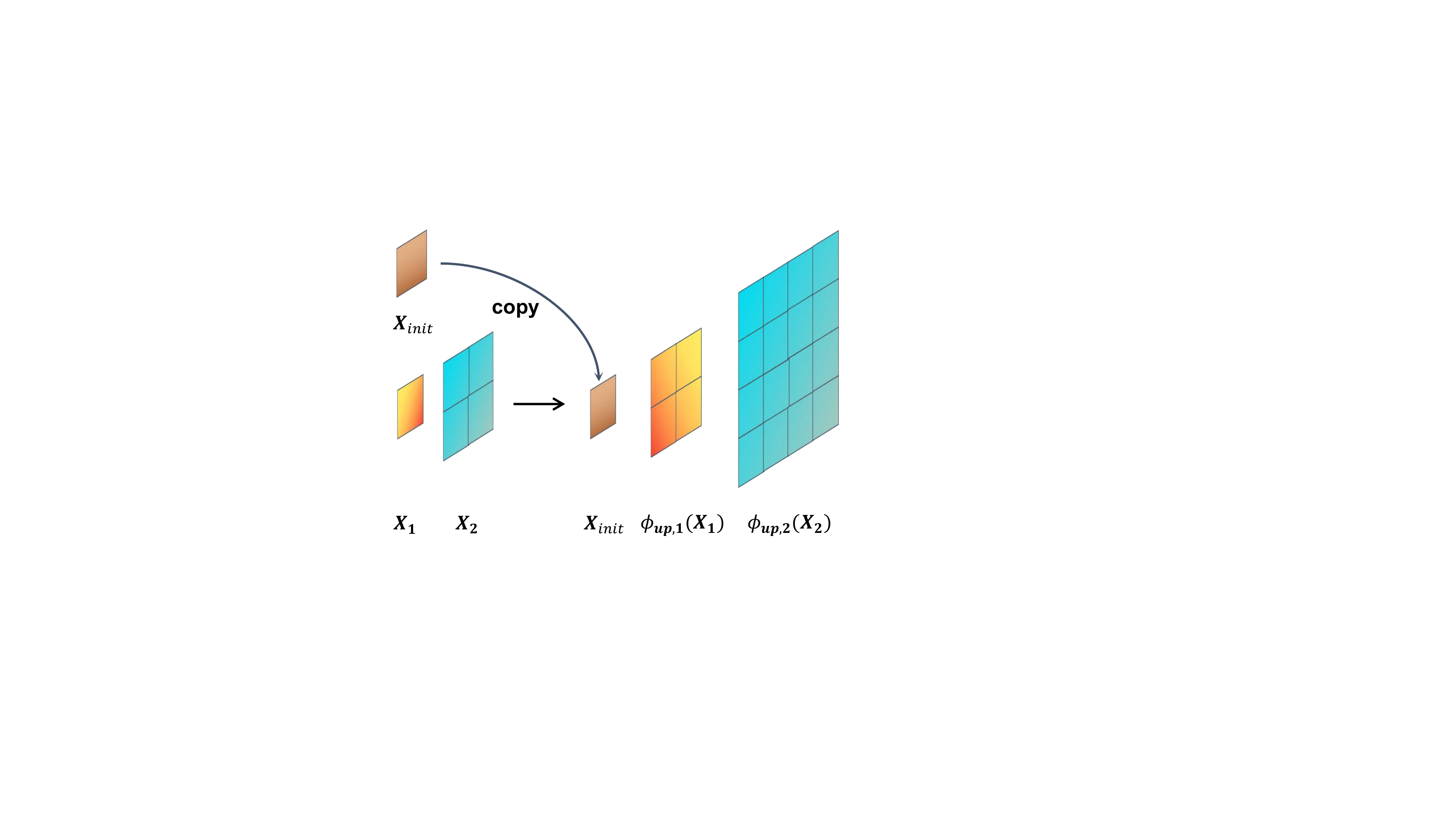}
    \caption{Example of the Pyramid Up-Interpolation Layer $\Phi_{{\rm up},2}$ used in the model.}
    \label{fig:iteration_trajectory}
\end{figure}

\begin{definition}[Up-interpolation Layer]\label{def:up_inter_layer}
The layer $\phi_{{\rm up}, r}$ takes the input feature map $X_r \in \R^{h_r \times w_r \times d}$ and computes the output feature map $Y_{r+1} \in \R^{h_{r+1} \times w_{r+1} \times d}$, where $h_r < h_{r+1}$ are the heights, $w_r < w_{r+1}$ are the widths, and $d \in \mathbb{N}$ is the hidden dimension. 
Let $U_r \in \R^{h_{r+1}w_{r+1} \times h_{r}w_{r}}$ denote a linear transformation matrix. Reshape $X$ into size $h_r w_r \times d$ by flattening its spatial dimensions. The output is defined via:
\begin{align*}
    Y_r = U_r X_r \in \R^{(h_{r+1}w_{r+1}) \times d}
\end{align*}
Then reshape back to $Y_r \in \R^{h_{r+1} \times w_{r+1} \times d}$.
\end{definition}

Building on this, we define the pyramid upsampling layer, whose input consists of multi-scale pyramid-shaped token maps.

\begin{definition}[Pyramid Up-Interpolation Layer $\Phi_{{\rm}}$]\label{def:pyramid_up_interpolation_layer}
The layer $\Phi_{{\rm up}, k}$ takes the initial token map $X_{\mathrm{init}}$ and the token maps $X_r \in \R^{h_r \times w_r \times c} (r \in [k])$ and computes new token maps $Y_{r} \in \R^{h_{r} \times w_{r} \times c}$. 
It sets $Y_1 = X_{\mathrm{init}}$ and computes $Y_{r+1} = \phi_{{\rm up}, r}(X_r)$ as in Definition~\ref{def:up_inter_layer}. The output is the set consisting of $Y_i (i \in [k+1])$. For clarity, we illustrate the schematic diagram of the $\Phi_{\mathrm{up},2}$ in Figure~\ref{fig:iteration_trajectory}.
\end{definition}

After an up-interpolation layer, the token maps (after being flattened into a proper shape) will be input into an attention layer. 

\begin{definition}[Single Attention Layer]\label{def:attn_matrix}\label{def:single_layer_attn}
 Let $X \in \R^{n \times d}$ denote the input matrix. Let $W_Q, W_K, W_V \in \R^{d \times d}$ denote the weight matrix for query, key, and value, respectively. 
 First, compute the attention matrix $A \in \R^{n \times n}$:
\begin{align*}
A_{i,j} := & ~\exp(  X_{i,*}   W_Q   W_K^\top   X_{j,*}^\top), \text{~~for~} i, j \in [n].
\end{align*}
Then, compute the output: 
\begin{align*}
    \mathsf{Attn} (X) := & ~ D^{-1} A X W_V, 
\end{align*}
where $D := \diag(A {\bf 1}_n) \in \R^{n \times n}$
\end{definition}

Then we are able to define the $\VAR$ transformer. A $\VAR$ Transformer with $N$ layers alternates between the attention layer and up sample blocks (where the output of each layer is reshaped to a proper shape as the input for the next layer): 
\begin{definition}[$\VAR$ Transformer]\label{def:new_var_transformer}
Let $N$ denote the total number of token map scales generated by $\VAR$ model.
The transformer $\mathsf{TF}$  takes an initial token map $X_{\mathrm{init}} \in \R^{1 \times d}$, computes
    Part 1. $Z_1 = \mathsf{Attn}_1(X_{\mathrm{init}})$,
    Part 2. For $k \in \{2,3,\dots,K\}$, $Z_k = \mathsf{Last}_{h_k \times w_k}(\mathsf{Attn}_k(\Phi_{\mathrm{up},k-1}(X_{\mathrm{init}},Z_1,\dots,Z_{k-2},$ $Z_{k-1})))$.
and finally outputs $\{Z_1, \dots, Z_N\}$. Here, $\Phi_{\mathrm{up},k}$ is defined in Definition~\ref{def:pyramid_up_interpolation_layer} and $\mathsf{Attn}_k$ is defined in Definition~\ref{def:single_layer_attn}. $\Phi_{\mathrm{up},k-1}(\cdot)$ is flatten into shape $\sum_{r=1}^k(h_r \times w_r) \times d$ as input for $\mathsf{Attn}_k$. We apply $\mathsf{Last}_{h_k \times w_k}$ to  retain the last $h_kw_k \times d$ dimensions of the $\mathsf{Attn}_k$ output and reshape it into $h_k \times w_k \times d$.
\end{definition}

\begin{figure*}[!ht]
    \centering
    \includegraphics[width=0.7\linewidth]{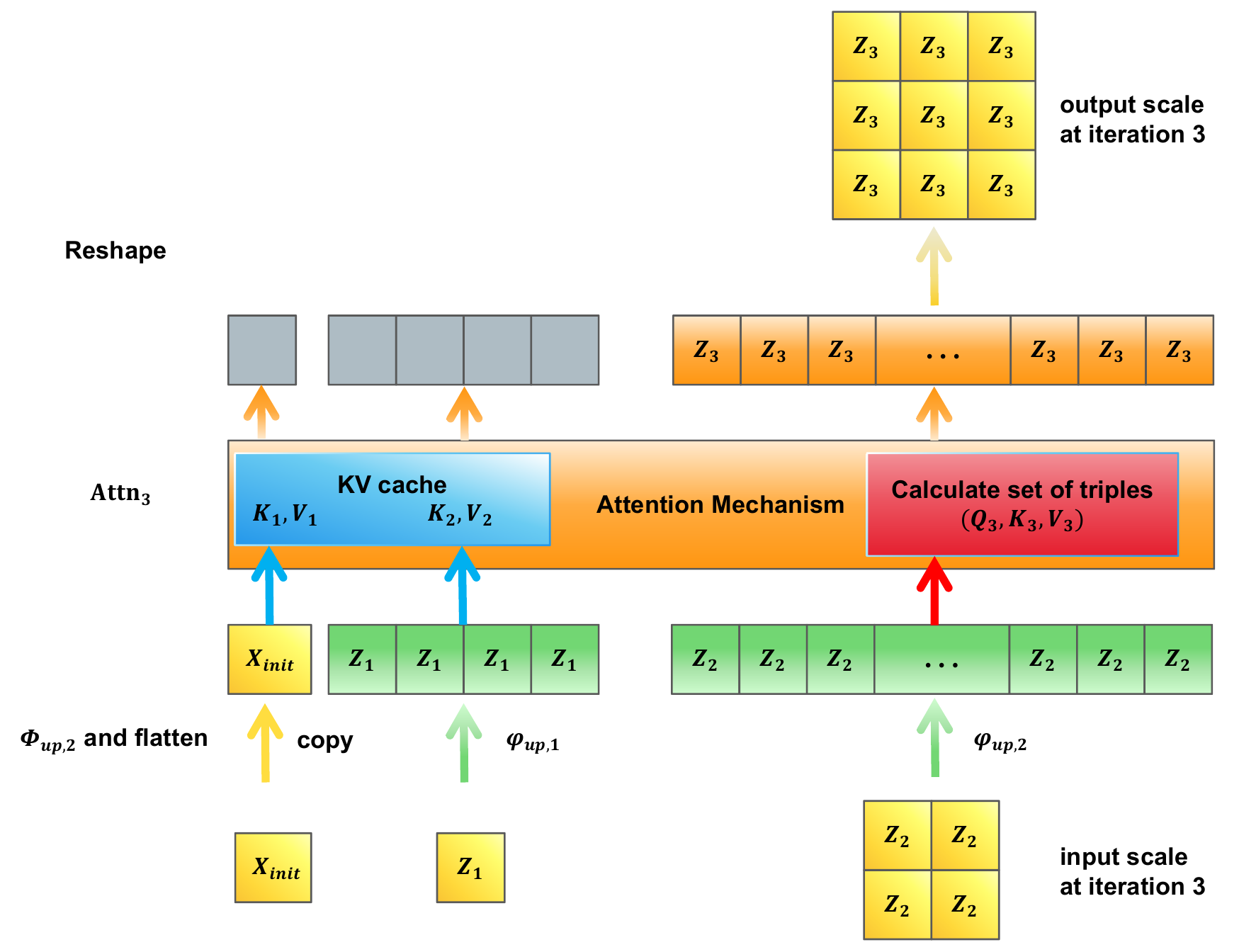}
    \caption{{\bf KV Cache Schematic of the $\VAR$ Transformer.} We present the example of the 3rd iteration of the VAR Transformer: Before the iteration, the KV cache contains $K_1, V_1$ vectors from the initial input and $K_2, V_2$ vectors from tokens generated via up-interpolation from Scale $Z_1$. During this iteration, the model computes new keys, queries, and values for tokens derived via up-interpolation from Scale $Z_2$, and appends the updated keys/values to the cache for autoregressive generation.}
    \label{fig:kv_cache} 
\end{figure*}

\subsection{KV-Cache Compression in VAR}\label{sec:var_kv-cache}
Unlike traditional autoregressive Transformers as we mentioned in Definition~\ref{def:trad_kv-cache}, which process tokens one by one in the input stream, the VAR Transformer takes as input tokens from a specific scale token map during each iteration. Specifically, at the start of the $i$-th iteration in the VAR Transformer, its KV cache already contains the key and value vectors from token maps of the first $i-2$ scales (after up-interpolation operations). During the current iteration, the model will incorporate new key and value vectors from the $(i-1)$-th scale token maps (after up-interpolation operation) into this cache.

We can formalize the KV cache compression problem in the $\VAR$ transformer's setting as follows: 
\begin{definition}[KV Cache Compress Problem in the $\VAR$ Transformer]\label{def:kv-cache_var:formal} At the $i$-th iteration of $\VAR$ transformer, the model will output the $i$-th scale of feature map $Z_i$ with size $h_i \times w_i$ (See Definition~\ref{def:new_var_transformer}). In this process, the newly input is a $h_iw_i \times d$-dimension matrix triple $(Q_i,K_i,V_i) \in (\R^{d \times (h_i w_i)})^{3}$. After the triple entry, the Attention function computes:
\begin{align*}
    \mathsf{Attn}(Q_i,\mathsf{K}_i,\mathsf{V}_i):=\sigma_i(\mathsf{K}_i \cdot Q_i)^{\top} \cdot \mathsf{V}_i \in \R^{(h_i w_i) \times d}
\end{align*}
where:
$
\mathsf{K}_i = \begin{bmatrix}
    K_1^\top &
    K_2^\top &
    \cdots &
    K_i^\top
\end{bmatrix}^\top,
\mathsf{V}_i = \begin{bmatrix}
    V_1^\top &
    V_2^\top &
    \cdots &
    V_i^\top
\end{bmatrix}^\top
$
are $\sum_{j=1}^i (h_j w_j) \times d$ matrices, and $\sigma_i:\R^{\sum_{j=1}^i (h_j w_j) \times h_i w_i} \to \R^{\sum_{j=1}^i (h_j w_j) \times h_i w_i} $ is the softmax function for each column. The $\mathsf{K}_i$ and $\mathsf{V}_i$ matrices are called the key-value (KV) cache.
\end{definition}
For clarity, we present the KV cache update of the $\VAR$ transformer during its 3rd iteration in Figure~\ref{fig:kv_cache}.

\subsection{Johnson-Lindenstrauss Projection}\label{sec:jl_projection}
In this section, we introduce some concepts about JL-transform from \cite{w14_book} closely related to our work.

\begin{definition}[Definition 3 in \cite{w14_book}, JL-tranform]\label{def:jlt}
For a random matrix $S \in \R^{k \times n}$ forms a Johnson-Lindenstrauss transform with parameters $\epsilon, \delta, f$ or $\mathrm{JLT}(\epsilon, \delta, f)$ for short, if with probability at least $1 - \delta$. then for any $f$-element subset $V \subset \R^n$, for all $v,v' \in V$ it holds that
\begin{align*}
    | \langle Sv, Sv' \rangle - \langle v,v'\rangle | \leq \epsilon \|v \|_2 \| v'\|_2
\end{align*}
\end{definition}

Next, we introduce a lemma that provides a concrete construction of a JL-transform using Gaussian random matrices.
\begin{lemma}[Theorem 4 in \cite{w14_book}]\label{lem:jlt_gaussian}
Let $\epsilon, \delta \in (0,0.1)$ and $S = \frac{1}{\sqrt{k}}R \in \R^{k\times n}$ where the entries $R_{i,j}$ of $R$ are independent standard normal random variables. Then if $k = \Omega(\epsilon^{-2} \log (f/\delta))$, then $S$ is a $\mathrm{JLT(\epsilon, \delta, f)}$
\end{lemma}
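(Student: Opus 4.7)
The plan is a standard three-step Johnson--Lindenstrauss argument: establish concentration of $\|Sw\|_2^2$ around $\|w\|_2^2$ for a single fixed $w$; convert norm preservation into inner-product preservation via polarization; and union-bound over the $O(f^2)$ auxiliary vectors induced by pairs from $V$. For the concentration step, I would fix any nonzero $w \in \R^n$ and note that the coordinates of $Rw$ are i.i.d.\ $N(0, \|w\|_2^2)$, so $\|Sw\|_2^2 = \frac{1}{k}\|Rw\|_2^2$ equals $\frac{\|w\|_2^2}{k} Y$ with $Y \sim \chi^2_k$. Combining the moment-generating function $\E[e^{tY}] = (1-2t)^{-k/2}$ with Chernoff (or applying Laurent--Massart directly) yields, for $\epsilon \in (0,1/2)$ and some absolute constant $c > 0$,
\begin{align*}
\Pr\!\left[\,\bigl|\|Sw\|_2^2 - \|w\|_2^2\bigr| > \epsilon \|w\|_2^2\,\right] \le 2\exp(-c k \epsilon^2).
\end{align*}

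For the polarization step, I would apply the identity $4 \langle a, b\rangle = \|a+b\|_2^2 - \|a-b\|_2^2$ to both $(u,v)$ and $(Su, Sv)$. Assuming norm preservation with relative error $\epsilon$ for $u+v$ and $u-v$, a short calculation gives
\begin{align*}
4\bigl|\langle Su, Sv\rangle - \langle u, v\rangle\bigr| \le \epsilon(\|u+v\|_2^2 + \|u-v\|_2^2) = 2\epsilon(\|u\|_2^2 + \|v\|_2^2),
\end{align*}
which for unit vectors collapses to $|\langle Su, Sv\rangle - \langle u, v\rangle| \le \epsilon$. To handle arbitrary $u, v$, I would normalize to $\bar u := u/\|u\|_2$, $\bar v := v/\|v\|_2$, apply the unit-vector inequality, and then use the bilinear identity $\langle Su, Sv\rangle = \|u\|_2 \|v\|_2 \langle S\bar u, S\bar v\rangle$ to recover the multiplicative bound $\epsilon \|u\|_2 \|v\|_2$ required by Definition~\ref{def:jlt}.

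For the union-bound step, the auxiliary vectors whose norms must be preserved are $\bar u \pm \bar v$ for each pair $u, v \in V$. Since the relative-error norm bound is scale-invariant in $w$, this reduces to preserving the norms of $\|v\|_2\, u \pm \|u\|_2\, v$ for all $u, v \in V$, a deterministic collection of at most $2f^2$ vectors. Applying the concentration bound with per-vector failure probability $\delta/(2f^2)$ and choosing $k = \Omega(\epsilon^{-2} \log(f/\delta))$ brings the union-bounded failure probability below $\delta$. Combining with the polarization step then yields the JLT property for $V$, completing the proof.

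The main obstacle lies in constant bookkeeping for the concentration step: while the $\chi^2_k$ tail is textbook, obtaining the clean two-sided bound $2\exp(-c k \epsilon^2)$ valid on $\epsilon \in (0, 0.1)$ requires treating the lower and upper tails separately, since the MGF of $\chi^2_k$ is only finite for $t < 1/2$ and the optimal $t$ differs in the two directions. A secondary subtlety is in the polarization step: the natural estimate delivers the \emph{additive} error $\epsilon(\|u\|_2^2 + \|v\|_2^2)$ rather than the \emph{multiplicative} $\epsilon\|u\|_2\|v\|_2$ required by the definition, so one must normalize first and exploit scale invariance to reduce to the unit-sphere case — otherwise the union bound would have to range over an ill-defined continuum of scalar multiples of pair differences rather than the $O(f^2)$ concrete vectors above.
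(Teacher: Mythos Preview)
The paper does not actually prove this lemma: it is quoted verbatim as Theorem~4 from \cite{w14_book} and used as a black box (the only proof in Section~\ref{sec:jl_projection} is for Lemma~\ref{lem:inner-product-jl-for-all}, which simply instantiates parameters in Lemma~\ref{lem:jlt_gaussian}). So there is no ``paper's own proof'' to compare against.

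That said, your proposal is the standard and correct argument, and it is essentially the one given in Woodruff's notes: $\chi^2_k$ concentration for a fixed vector, polarization to pass from norm preservation to inner-product preservation, and a union bound over the $O(f^2)$ auxiliary vectors. Your handling of the additive-versus-multiplicative subtlety via normalization is exactly right and is the point where careless write-ups sometimes slip. One minor remark: in the union bound you can equivalently just union over the at most $2\binom{f}{2} + f \le f^2$ vectors $\{\bar u + \bar v, \bar u - \bar v : u,v \in V\}$ together with the $f$ singletons (to cover the diagonal $u=v$ case of Definition~\ref{def:jlt}); your count of $2f^2$ is fine and only affects the constant inside $\Omega(\cdot)$.
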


Next, we present a specific application of the JL-transform in the context of random projections.
\begin{lemma}[Johnson-Linderstrauss (JL) Random Projections]
    \label{lem:inner-product-jl-for-all}
    Suppose we have $n$ points $p_1,...,p_n \in \R^n$ such that $||p_i||_2 \leq 1$ for all $i \in [n]$. Let $f:\R^n\to \R^d$ be a random mapping defined as $f(u) = \frac{1}{\sqrt{d}}Au$ where $A \in \R^{d\times n}$ is a random matrix with its entries drawn independently from a standard normal distribution. Then setting: 
    $
        d \geq \Omega(\epsilon^{-2}\log(n))
    $
    allows us to guarantee that with probability at least $1-\frac{1}{n}$ it holds for all $(i,j) \in [n]^2$ that
    $
    |p_i^\top p_j - f(p_i)^\top f(p_j)| \leq \epsilon.
    $
\end{lemma}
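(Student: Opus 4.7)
The plan is to derive this lemma as a direct corollary of Lemma~\ref{lem:jlt_gaussian}, since the random mapping $f(u)=\tfrac{1}{\sqrt{d}}Au$ is syntactically the Gaussian JL construction $S=\tfrac{1}{\sqrt{k}}R$ from that lemma (with $k=d$ and $R=A$). So the main task is simply to choose the parameters $(\epsilon,\delta,f)$ correctly and then strip away the factor of $\|v\|_2\|v'\|_2$ using the norm bound on the $p_i$.

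First, I would fix the target accuracy $\epsilon$ from the statement, set the failure probability to $\delta = 1/n$, and set the cardinality parameter to $f = n$, taking the finite subset $V=\{p_1,\dots,p_n\}\subset \R^n$. By Lemma~\ref{lem:jlt_gaussian}, choosing
\begin{align*}
d \;=\; \Omega\!\left(\epsilon^{-2}\log(f/\delta)\right) \;=\; \Omega\!\left(\epsilon^{-2}\log(n/(1/n))\right) \;=\; \Omega(\epsilon^{-2}\log n)
\end{align*}
guarantees that $S:=\tfrac{1}{\sqrt{d}}A$ is a $\mathrm{JLT}(\epsilon,1/n,n)$. Hence with probability at least $1-1/n$, for every pair $p_i,p_j\in V$,
\begin{align*}
\bigl|\langle S p_i, S p_j\rangle - \langle p_i, p_j\rangle\bigr| \;\leq\; \epsilon\,\|p_i\|_2\,\|p_j\|_2.
\end{align*}

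Second, I would invoke the hypothesis $\|p_i\|_2\leq 1$ for all $i\in[n]$, which upgrades $\|p_i\|_2\|p_j\|_2\leq 1$ and thus simplifies the right-hand side to $\epsilon$. Identifying $Sp_i=f(p_i)$, this gives the claimed uniform bound $|p_i^\top p_j - f(p_i)^\top f(p_j)|\leq \epsilon$ over all $(i,j)\in[n]^2$ on the same high-probability event.

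Since this is essentially a repackaging of the known Gaussian JL transform, there is no real obstacle; the only subtlety worth flagging is the bookkeeping on the logarithm, namely that the $f$-element subset has size $n$ (not $n^2$ pairs), and that taking $\delta=1/n$ already absorbs a $\log n$ factor so the final dimension remains $d=\Omega(\epsilon^{-2}\log n)$ rather than $\Omega(\epsilon^{-2}\log(n^2))$ — these differ only in constants, which are swallowed by the $\Omega(\cdot)$ notation.
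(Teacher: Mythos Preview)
Your proposal is correct and follows essentially the same approach as the paper: set $k=d$, $f=n$, $\delta=1/n$, invoke Lemma~\ref{lem:jlt_gaussian} to obtain a $\mathrm{JLT}(\epsilon,1/n,n)$, apply Definition~\ref{def:jlt} to get the inner-product guarantee, and then use $\|p_i\|_2\leq 1$ to absorb the norm factor. The paper's proof is slightly terser (it writes $d=\Omega(\epsilon^{-2}\log n)$ without spelling out the $\log(f/\delta)=\log(n^2)$ step you flag), but the argument is identical.
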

\begin{proof}
    Setting parameters $k = d$ $f = n$ and $\delta = 1/n$, by Lemma~\ref{lem:jlt_gaussian}, we have if $d = \Omega(\epsilon^{-2}\log(n))$, then with probability $1-\frac{1}{n}$, for all $i,j \in [n]^2$, we have
    $
        |p_i^\top p_j - f(p_i)^\top f(p_j)| \leq \epsilon \|p_i\|_2 \|p_j\|_2
        \leq \epsilon,
    $
    where the first step follows from Definition~\ref{def:jlt}, and the second step follows from that $\|p_i\|_2 \leq 1$ for all $i \in [n]$.
\end{proof}

\section{Sub-quadratic Space for KV Cache Compression is Impossible}\label{sec:main_result}
In Section~\ref{sec:index_problem}, we  introduce the \textsc{Index} problem. In Section~\ref{sec:hardness_index_problem}, we present the hardness of \textsc{Index} problem. In Section~\ref{sec:multi_index_problem}, we extend the \textsc{Index} problem to multiple indices. Section~\ref{sec:hardness_multi_index_problem} and Section~\ref{sec:lower_bound_result} present the communication complexity lower bound for \textsc{Multi-Index}. In Section~\ref{sec:lower_bound_result_approx}, we state the space complexity lower bound for key-value cache in precise attention computation.

\subsection{\textsc{Index} Problem}\label{sec:index_problem}
In this section, we first present the definition of \textsc{Index} problem, then we show our main result.
\begin{definition}[\textsc{Index} Problem,[\cite{ho25}]]
    Alice holds a bit string $x \in \{0,1\}^n$ and Bob holds an index $i \in [n]$. Alice sends a single message (one-way) $M \in \{0,1\}^*$ to Bob, whose goal is to output $x_i$ with probability at least $3/5$.
\end{definition}
\subsection{Hardness of \textsc{Index} Problem}\label{sec:hardness_index_problem}
We present the hardness of \textsc{INDEX} problem which is a well-known result in communication complexity.
\begin{lemma}[\cite{ho25}]
\label{thm:index-lb}
The one-way, randomized communication complexity of \textsc{Index} is $\Omega(n)$.
\end{lemma}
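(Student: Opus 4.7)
The plan is to establish the standard information-theoretic lower bound that underlies this classical communication-complexity result. The first step is to invoke Yao's minimax principle to reduce from randomized to distributional complexity: it suffices to exhibit an input distribution under which every deterministic one-way protocol with distributional success probability at least $3/5$ forces Alice to transmit $\Omega(n)$ bits. I would take $x$ uniform on $\{0,1\}^n$ and $i$ uniform on $[n]$, independent of $x$, and, without loss of generality, pad Alice's message to a fixed length $c$ so that the communication cost equals $c$.

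Next I would set up the entropy bookkeeping. Writing $M = M(x) \in \{0,1\}^c$ for Alice's message and $Z_i := B(M,i) \in \{0,1\}$ for Bob's guess, the key point is that $x_1,\ldots,x_n$ are independent and uniform, so by the chain rule together with the fact that conditioning only reduces entropy,
\begin{align*}
H(x \mid M) = \sum_{i=1}^n H(x_i \mid M, x_{<i}) \leq \sum_{i=1}^n H(x_i \mid M),
\end{align*}
which rearranges to $I(x;M) \geq \sum_{i=1}^n I(x_i;M)$. Since $I(x;M) \leq H(M) \leq c$, the task reduces to lower-bounding the $I(x_i; M)$ by a positive constant on average.

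For the per-coordinate bound I would apply Fano's inequality. Let $p_i := \Pr_{x}[B(M(x),i) = x_i]$ denote Bob's success probability on coordinate $i$; Fano gives $H(x_i \mid M) \leq H_2(1 - p_i)$, where $H_2$ is the binary entropy. The distributional success hypothesis reads $\tfrac{1}{n}\sum_i p_i \geq 3/5$, hence $\tfrac{1}{n}\sum_i (1 - p_i) \leq 2/5$, and by concavity of $H_2$ together with Jensen's inequality,
\begin{align*}
\frac{1}{n}\sum_{i=1}^n H(x_i \mid M) \leq H_2\!\left(\frac{1}{n}\sum_{i=1}^n (1 - p_i)\right) \leq H_2(2/5) < 1.
\end{align*}
Putting everything together yields $c \geq I(x;M) \geq n\bigl(1 - H_2(2/5)\bigr) = \Omega(n)$, as required.

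The step I expect to be most delicate is not the entropy computation, which is essentially textbook, but the reduction itself: I must be careful that Yao's minimax principle correctly converts the worst-case randomized success guarantee of $3/5$ into a distributional success guarantee of $3/5$ under the uniform product distribution, and that the ``average success over $(x,i)$'' really is $\tfrac{1}{n}\sum_i p_i$. The constant $3/5$ appearing in the problem is chosen precisely so that $H_2(2/5) < 1$ and the resulting linear-in-$n$ gap survives; if the success probability were only $1/2 + o(1)$ one would need a sharper ``advantage'' version of Fano's inequality instead.
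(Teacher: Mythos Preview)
Your proposal is correct and follows essentially the same information-theoretic route as the paper: uniform input distribution, Fano's inequality per coordinate, chain rule for mutual information, and the bound $|M|\geq I(X;M)\geq n(1-H_2(2/5))$. The only cosmetic difference is that the paper skips the explicit Yao reduction and Jensen step, applying the worst-case randomized success guarantee directly to conclude $\Pr[X_j\neq X_j']\leq 2/5$ for every fixed $j$, whereas you pass through distributional success and average via concavity of $H_2$; both arrive at the same bound.
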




\subsection{\textsc{Multi-Index} Problem}\label{sec:multi_index_problem}
Then we extend the \textsc{INDEX} problem, where the index held by Bob changes from 1 to multiple $k$.
\begin{definition}[\textsc{Multi-Index} Problem] Alice holds a bit string $ x \in \{0,1\}^n $, and Bob holds a subset of $ k $ distinct indices $ S = \{i_1, \dots, i_k\} \subseteq [n] $ of size $ k $. Alice sends a single one-way message $ M \in \{0,1\}^* $ to Bob. Bob’s goal is to output $ x_i $ for every index $ i \in S $, such that the probability that Bob correctly outputs all $x_i$ where $i \in S$ is at least $ 3/5 $. \end{definition}

\subsection{Hardness of \textsc{Multi-Index} Problem}\label{sec:hardness_multi_index_problem}
We can extend Theorem~\ref{thm:index-lb} to the \textsc{Multi-Index} problem.
\begin{lemma}
\label{thm:index-multi}
The one-way, randomized communication complexity of \textsc{Multi-Index} is $\Omega(n)$.
\end{lemma}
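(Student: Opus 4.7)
The plan is to establish Lemma~\ref{thm:index-multi} by a direct reduction from the standard \textsc{Index} problem, using the fact that \textsc{Multi-Index} is a strict generalization obtained by enlarging Bob's input from a single index to a $k$-subset of $[n]$. Concretely, I will show that any one-way randomized protocol $\Pi$ for \textsc{Multi-Index} with message length $\ell$ yields a one-way randomized protocol for \textsc{Index} with the same message length and the same success probability threshold $3/5$, and then invoke Lemma~\ref{thm:index-lb} to conclude $\ell = \Omega(n)$.

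First I would fix an instance of \textsc{Index}: Alice holds $x \in \{0,1\}^n$ and Bob holds an index $i \in [n]$. To simulate \textsc{Multi-Index}, Bob deterministically augments his single index $i$ to a $k$-subset $S \subseteq [n]$ of distinct indices that contains $i$; for instance, pad with the first $k-1$ indices of $[n] \setminus \{i\}$ in increasing order. Alice, who does not need to know $S$, simply runs her side of $\Pi$ on input $x$ and sends the resulting message $M \in \{0,1\}^\ell$ to Bob. Bob then runs his side of $\Pi$ on $(M, S)$, obtains guesses $\{\widehat{x}_j\}_{j \in S}$, and outputs $\widehat{x}_i$ as his prediction of $x_i$.

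Next I would analyze correctness. By the definition of \textsc{Multi-Index}, with probability at least $3/5$ the protocol $\Pi$ outputs $\widehat{x}_j = x_j$ for every $j \in S$ simultaneously; in particular $\widehat{x}_i = x_i$. Hence the simulated protocol solves \textsc{Index} with success probability at least $3/5$ using $\ell$ bits of one-way communication. Applying Lemma~\ref{thm:index-lb}, which gives $\Omega(n)$ as the one-way randomized communication lower bound for \textsc{Index}, forces $\ell = \Omega(n)$, which is the desired bound for \textsc{Multi-Index}.

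Because the reduction is so direct (a syntactic embedding of \textsc{Index} into \textsc{Multi-Index} with $k \geq 1$), there is essentially no substantive obstacle; the only subtlety worth flagging explicitly is that the $3/5$ success guarantee for \textsc{Multi-Index} is \emph{joint} over all indices in $S$, which immediately implies the marginal success at the single distinguished index $i$, so no union-bound or amplification argument is needed and the constants carry through unchanged.
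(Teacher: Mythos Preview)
Your proposal is correct and follows essentially the same approach as the paper: both argue that \textsc{Multi-Index} is at least as hard as \textsc{Index} via a direct embedding, then invoke Lemma~\ref{thm:index-lb}. Your version is more explicit, spelling out the padding of Bob's single index to a $k$-subset and noting that the joint success guarantee immediately yields the marginal one, whereas the paper simply observes that the $k=1$ case degenerates to \textsc{Index} and asserts that $k\geq 2$ cannot be easier.
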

\begin{proof}
    This proof can be easily inducted from Lemma A.2. In particular, it is important to note that when $k=1$, the \textsc{Multi-Index} problem degenerates into the \textsc{index} problem, in which case the communication complexity lower bound is $\Omega(n)$. When $k \geq 2$, we obviously cannot use less than $\Omega(n)$ bits of information to decode multiple elements held by Alice with the same probability.
\end{proof}

\subsection{Space Lower Bound Result for Exact Attention Computation}\label{sec:lower_bound_result}

In this section, we provide our lower bound result. In our setting, we assume that the $\VAR$ transformer generates a pyramid of feature maps with a total number of scales $N$, where the height and width at the $N$-th scale satisfy $h_N = w_N = n$. Each subsequent level in the pyramid maintains proportional growth in spatial dimensions so that $N = O(\log n)$. The above assumption slightly simplifies the specific implementation in \cite{tjy25} while remaining largely consistent. We cover the case of exact computation $(\eta = 0)$ on the high dimensional ($d = \Omega(\log n)$) regime to illustrate the key ideas of the proof. We later extend this proof with slightly more complicated arguments to the general case.

We construct a reduction from \textsc{Multi-Index}. Let $L = \sum_{j=1}^{N-1} (h_j w_j)$ denote the total number of tokens up to the $N-1$-th iteration. Specifically, we can compute that
$
    L:= 1^2 + 2^2 +  \cdots + (2^{\log n-1})^2
    = \frac{n^2-1}{3} = O(n^2).
$

\begin{theorem}[Space Complexity Lower Bounds for Key-Value Cache in Precise Attention Computation, formal version of Theorem~\ref{thm:space_lower_bound_precise:informal}]\label{thm:space_lower_bound_precise:formal}
Let $N$ denote the total number of feature map scales generated by $\VAR$ transformer, where the height and width at the $N$-th level satisfy $h_N = w_N = n$. There exists some universal constant $C_u > 1$ and for $d \geq C_u \log n$, any algorithm that can, with probability at least $\frac{9}{10}$ produce an output 
\begin{align*}
    o_N:= \mathsf{Attn}(Q_N, \mathsf{K}_N, \mathsf{V}_N) \in \R^{n^2 \times d}
\end{align*}
must use at least $\Omega(n^2 d)$ bits of memory.
\end{theorem}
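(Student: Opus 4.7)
\textbf{Reduction from \textsc{Multi-Index}.} The plan is to reduce from \textsc{Multi-Index} on strings of length $m = L \cdot d$, where $L = \sum_{j=1}^{N-1} h_j w_j = (n^2-1)/3 = \Theta(n^2)$, so that Lemma~\ref{thm:index-multi} supplies a one-way randomized communication lower bound of $\Omega(m) = \Omega(n^2 d)$. Alice interprets her bit string as a binary matrix $V^\star \in \{0,1\}^{L \times d}$. She then simulates the first $N-1$ iterations of the $\VAR$ KV-cache problem (Definition~\ref{def:kv-cache_var:formal}) on her own, choosing each triple $(Q_j, K_j, V_j)$ so that when stacked the caches satisfy $\mathsf{K}_{N-1} = \tau \, U$ and $\mathsf{V}_{N-1} = V^\star$, where $U \in \R^{L \times d}$ is a public Johnson--Lindenstrauss matrix supplied by Lemma~\ref{lem:inner-product-jl-for-all} with error parameter $\epsilon = 1/10$, and $\tau = C \log n$ for a sufficiently large absolute constant $C$. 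Because $d \geq C_u \log n$, the rows $u_1, \dots, u_L$ of $U$ satisfy $|\langle u_i, u_j\rangle - \mathbf{1}\{i=j\}| \leq 1/10$ with probability at least $1 - 1/L$. Alice hands the algorithm's memory state to Bob as her one-way message.

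\textbf{Bob's decoding.} Given his set $S \subseteq [Ld]$, Bob groups its indices by the block $t \in [L]$ they belong to and lets $T \subseteq [L]$ be the set of touched blocks. Since $|T| \leq L < n^2$, Bob can populate $Q_N \in \R^{d \times n^2}$ by placing $u_t$ in column $j_t$ for each $t \in T$ and zeros in the remaining columns, together with the trivial choices $V_N = 0$ and $K_N = 0$. Resuming the algorithm, he obtains $o_N = \sigma_N(\mathsf{K}_N Q_N)^\top \mathsf{V}_N \in \R^{n^2 \times d}$. In the column $j_t$, the true-key logit is $\tau \langle u_t, u_t\rangle \geq 9\tau/10$, every competing old-cache logit lies in $[-\tau/10, \tau/10]$, and every fresh-block logit equals $0$; hence for $C$ large the softmax mass on position $t$ exceeds $1 - n^{-6}$, giving $\|o_N[j_t, :] - V^\star[t, :]\|_\infty \leq n^{-6} \leq 1/4$. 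Since $V^\star[t, :] \in \{0,1\}^d$, entrywise rounding recovers $V^\star[t, :]$ exactly, and Bob reads off the bits indexed by $S$ inside each touched block.

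\textbf{Conclusion and main obstacle.} Union-bounding the $1/10$ failure of the algorithm with the $1/L$ failure of the JL event leaves overall success probability at least $3/5$ for $n$ large, so Lemma~\ref{thm:index-multi} forces the message, and therefore the algorithm's memory, to have length $\Omega(m) = \Omega(n^2 d)$. The main technical obstacle is verifying that Alice's targeted cache is realizable inside the $\VAR$ forward pass; this is handled cleanly by working at the level of Definition~\ref{def:kv-cache_var:formal}, which accepts an arbitrary triple $(Q_j, K_j, V_j)$ at each iteration, so Alice just partitions $\tau U$ and $V^\star$ into $N-1$ contiguous blocks of heights $h_1 w_1, \dots, h_{N-1} w_{N-1}$ and submits block $j$ as $(K_j, V_j)$ (the intermediate queries $Q_j$ are irrelevant because Alice discards the intermediate outputs). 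The secondary ingredient, softmax concentration on the matching key despite $L = \Theta(n^2)$ competitors, follows from JL quasi-orthogonality combined with the scaling $\tau = \Theta(\log n)$, and the hypothesis $d = \Omega(\log n)$ is exactly what allows $\Theta(n^2)$ such near-orthonormal vectors to be packed into $\R^d$.
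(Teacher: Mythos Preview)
Your proposal is correct and follows essentially the same reduction as the paper: encode the bit string into the value cache, install near-orthonormal JL vectors as keys, have Bob query with the matching key so that softmax concentrates on the desired row, and invoke the \textsc{Multi-Index} lower bound on an instance of size $\Theta(Ld)=\Theta(n^2d)$. The only cosmetic differences are that you place the $\Theta(\log n)$ scaling on Alice's keys (via $\tau$) rather than on Bob's queries, and you recover whole rows by rounding rather than analyzing a single coordinate; both variants are equivalent and the argument goes through for the same reasons.
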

We provide a highly proof idea. 
We assume Alice holds a bit string $x \in \{0,1\}^{L \times d}$ and Bob holds $n^2$ distinct indices $S = \{i_l,j_l\}_{l=1}^{n^2} \subseteq [L] \times [d]$. Alice sends a single message to Bob, who must output all $x_{i_l,j_l}$ correctly with probability at least $3/5$. By Theorem~\ref{thm:index-multi}, the one-way, randomized communication complexity of this problem is $\Omega(n^2 d)$. Our goal is to design a protocol for \textsc{Multi-Index} by using a supposed algorithm $\mathcal{A}$ for calculating the attention function that uses $S$ bits of memory. Having that reduction in place, Alice simply communicates these $S$ bits of the algorithm's memory tape to Bob, allowing us to show that $S = \Omega(n^2d)$, and thus proving the theorem.
\begin{proof}[Proof of Theorem~\ref{thm:space_lower_bound_precise:formal}]
We provide the formal proof. 
    \paragraph{Alice.} Alice begins by inserting the following $L$ triples $\{(q_i,k_i,v_i)\}_{i=1}^L$ of vectors in $\R^d$ to the streaming algorithm $\mathcal{A}$:
    \begin{itemize}
        \item $q_1,...,q_L$ are all the zero vector in $\R^d$, and they do not affect the final output.
        \item $k_1,...,k_L \in \R^d$ are calculated before the protocol starts (and agreed upon by both Alice and Bob) as $d$-dimensional projections of the orthonormal basis $e_1 = (1,0,...,0),...,e_{4n^2}=(0,0,...,1)$ of $\R^{4n^2}$ in a way that approximately preserves orthonormality. Specifically, we can invoke Lemma \ref{lem:inner-product-jl-for-all} to produce $k_1,...,k_L \in \R^d$ such that with probability at least $1-\frac{1}{4n^2}$ it holds for all $i\neq j$ that
        $
            |k_i^\top k_j| \leq \epsilon
        $
        and for all $i \in [L]$ that
        $
        |k_i^\top k_i - 1| \leq \epsilon
        $
        We do this by letting $f(x) = \frac{1}{\sqrt{d}}Ax$ where $A \in \R^{d\times L}$ is a JL random matrix and defining $k_i = f(e_i)$. Crucially, orthonormality is preserved because $d = \Omega(\log n)$. We resolve the correct value for $\epsilon$ later in the proof.
        \item $v_1,...,v_L \in \R^d$ contain rows of $x \in \{0,1\}^{L\times d}$. In other words, Alice uses $V_n$ to store her input $x$ through $\mathcal{A}$:
        $
            \mathsf{V}_n := x
        $
         
         After inserting these $L$ triples into $\mathcal{A}$, Alice observes $\mathcal{A}$'s execution and sends its memory state, consisting of $S$ bits, to Bob. This allows Bob to continue the execution of $\mathcal{A}$ exactly where Alice left off, without, of course, having any additional knowledge of $x$.
    \end{itemize}

    \paragraph{Bob.} Recall that Bob's input is an index set $\{(i_l,j_l)\}_{l=1}^{n^2}$ into $x$. 
    In our protocal, Bob will enter a triple $(Q_{N},K_{N},V_{N})$ into $\mathcal{A}$, where he constructs
    \begin{align*}
        Q_{N} = & ~ C \begin{bmatrix}
            k_{i_1} &
            k_{i_2} &
            \cdots &
            k_{i_{n^2}}
        \end{bmatrix}^\top \in \R^{n^2 \times d} \\ K_{N} = & ~ {\bf 0}^{d \times n^2} \quad V_{N} =  {\bf 0}^{d \times n^2}
    \end{align*}
    where $C$ is a positive number.
    
    Then we will have
    $
        \mathsf{K}_{N} = \begin{bmatrix}
            K_1 &
            K_2 &
            \cdots &
            K_{N-1} &
            {\bf 0}
        \end{bmatrix}^\top, \mathsf{V}_{N} = \begin{bmatrix}
            x &
            {\bf 0}
        \end{bmatrix}^\top
    $
    Now, we claim that Bob can recover the value of $\{x_{i_l j_l}\}_{l=1}^{n^2}$ from the output $o_{N}$, which is equal to $\mathsf{Attn}(Q_{N}, \mathsf{K}_{N}, \mathsf{V}_{N}) \in \R^{n^2 \times d}$. Specifically, we have that
    \begin{align*}
        \mathsf{Attn}(Q_{N}, \mathsf{K}_{N}, \mathsf{V}_{N}) = \sigma_{N}(\mathsf{K}_{N}\cdot Q_{N})^\top \cdot \mathsf{V}_{N}
    \end{align*}
    Then we consider for each $l \in [n^2]$, we need to compute
    \begin{align*}
        (\mathsf{Attn}(Q_{N}, \mathsf{K}_{N}, \mathsf{V}_{N}))_{l} = \sigma_{N}(\mathsf{K}_N \cdot C k_{i_{l}}^\top)^\top \cdot \mathsf{V}_N
    \end{align*}
    Let $s := \mathsf{K}_N \cdot Ck_{i_{l}}^\top \in \R^{L+n^2}$. And we can have that with probability at least $1-\frac{1}{L}$ that this is a vector in $\R^{L+n^2}$ with the property that $s_m$ is close to $0$ if $m \neq i_l$ and $s_{i_l}$ is close to $C$. Specifically, with probability at least $1-\frac{1}{L}$:
    \begin{align*}
        s_m \leq C \epsilon ~~\mathrm{for}~~m \neq i_l ~~\mathrm{and}~~ s_{i_l} \geq C(1-\epsilon)
    \end{align*}
    This is also true vacuously for $m \geq L+1$ because in this way $s_m = 0$ by our construction. Now, let $\xi:= \mathsf{Softmax}(s) \in \R^{L+n^2}$. We can see that the softmax vector $\epsilon$ spikes at index $i_l$. We use this spike to read off $x_{i_l j_l}$ via the $V$ matrix. Let us calculate the product $\epsilon^\top \mathsf{V}_N$. This is a vector in $\R^{d}$, whose $j_l$-th entry is:
    \begin{align*}
        (\xi^\top \cdot \mathsf{V}_N)_{j_l} =&~ \sum_{m=1}^{L} x_{mj_l} \xi_m\\
        =&~ x_{i_l j_l} \xi_{i_l} + \sum_{m \neq i_l} x_{m j_l} \xi_{m}
    \end{align*}
    We can now examine two separate cases:
    {\bf Case 1:} $x_{i_l j_l} = 0.$ Then we have that
    \begin{align*}
        (\xi^\top \cdot \mathsf{V}_N)_{j_l} & ~ = \sum_{m \neq i_l} x_{m j_l} \xi_{m} \\
        & ~ = \frac{\sum_{m \neq i_l}x_{m j_l}e^{s_m}}{e^{s_{i_l}}+\sum_{m \neq i_l}e^{s_m}} \\
        & ~ \leq \frac{\sum_{m \neq i_l}e^{s_m}}{e^{s_{i_l}}+\sum_{m \neq i_l}e^{s_m}}
    \end{align*}
    The function $\frac{x}{x+y}$ is maximized when $x$ is maximized and $y$ is minimized, which allows us to bound:
    \begin{align*}
        & ~ (\xi^\top \cdot \mathsf{V}_N)_{j_l}  \leq \frac{(L+n^2 -1)e^{C\epsilon}}{(L+n^2 -1)e^{C\epsilon} + e^{C(1-\epsilon)}}:=\delta
    \end{align*}
    
    {\bf Case 2:} $x_{i_l j_l} = 1.$ Then we have that
    \begin{align*}
        (\xi^\top \cdot \mathsf{V}_N)_{j_l} & ~  = x_{i_l j_l}\xi_{i_l}+\sum_{m \neq i_l} x_{m j_l} \xi_{m} \\ 
        & ~ = \frac{e^{s_{i_l}}+\sum_{m \neq i_l}x_{m j_l}e^{s_m}}{e^{s_{i_l}}+\sum_{m \neq i_l}e^{s_m}} \\
        & ~ \geq \frac{e^{s_{i_l}}}{e^{s_{i_l}}+\sum_{m \neq i_l}e^{s_m}} 
    \end{align*}
    The function $\frac{x}{x+y}$ is minimized when $x$ is minimized and $y$ is maximized, which allows us to bound:
    \begin{align*}
        & ~ (\xi^\top \cdot \mathsf{V}_N)_{j_l}  \geq \frac{e^{C(1-\epsilon)}}{(L+n^2 -1)e^{C\epsilon} + e^{C(1-\epsilon)}}:=\Delta
    \end{align*}
    For Bob to always be able to distinguish between the two cases, we want to ensure that $\delta \leq \Delta$. Then we can choose $C = \frac{2\ln(L+n^2-1)}{1-\epsilon}$ and $\epsilon = 0.1$ to allow Bob to distinguish between $x_{i_l j_l} = 1$ and $x_{i_l j_l}=0$ with probability at least $1 - 1/4n^2$. Then for all $l \in [n^2]$, the probability of the event that Bob can correctly distinguish all $\{x_{i_l,j_l}\}$ is $(1-1/4n^2)^{n^2} \geq 3/5$ when $n \geq 2$. Then we conclude the proof.
    \end{proof}

\subsection{Space Lower Bound Result for Approximate Attention Computation}\label{sec:lower_bound_result_approx}
Now we extend the above result to the approximate computation of the attention function.
\begin{theorem}
\label{thm:lower-bound-appx}
Let $Z_N := \mathsf{Attn}(Q_N,\mathsf{K}_N,\mathsf{V}_N)$ and $d = \Omega(\log n)$. Any algorithm that can, with probability at least $9/10$ produce an output $\mathcal{O} \in \R^{n^2\times d}$ that is a $(1\pm\eta)$-approximation of $Z_n$for $\eta \in (0,1)$ must use at least $\Omega(n^2d)$ bits of memory.
\end{theorem}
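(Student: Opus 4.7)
}

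The plan is to reuse the reduction from \textsc{Multi-Index} established in the proof of Theorem~\ref{thm:space_lower_bound_precise:formal}, and only modify the quantitative parameters so that the two cases Bob needs to distinguish remain separated even after multiplicative $(1\pm\eta)$ distortion. Alice will still encode her bit string $x \in \{0,1\}^{L\times d}$ into $\mathsf{V}_N$, still use the Johnson--Lindenstrauss projections $k_i = f(e_i)$ as keys, and still send the memory tape of the supposed streaming attention algorithm $\mathcal{A}$ to Bob. Bob will still query with $Q_N = C\,[k_{i_1},\dots,k_{i_{n^2}}]^\top$ with zero padded $K_N, V_N$. The only change is that Bob now receives an approximate output $\mathcal{O}$ satisfying $(1-\eta)(Z_N)_{l,j_l} \le \mathcal{O}_{l,j_l} \le (1+\eta)(Z_N)_{l,j_l}$ instead of the exact attention entry.

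First, I would recompute the two case bounds from Theorem~\ref{thm:space_lower_bound_precise:formal} after distortion. In Case~1 ($x_{i_l j_l}=1$), the approximate output is at least $(1-\eta)\Delta$, while in Case~0 ($x_{i_l j_l}=0$) it is at most $(1+\eta)\delta$, where $\delta$ and $\Delta$ are exactly the quantities defined in the exact-case proof. So distinguishability requires
\begin{align*}
    (1+\eta)\,\delta \;<\; (1-\eta)\,\Delta,
\end{align*}
which, after substituting the closed forms of $\delta,\Delta$, is equivalent to
\begin{align*}
    \frac{\Delta}{\delta} \;=\; \frac{e^{C(1-2\epsilon)}}{L+n^{2}-1} \;>\; \frac{1+\eta}{1-\eta}.
\end{align*}
Next, I would solve this condition for the scaling parameter $C$. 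Keeping $\epsilon = 0.1$ as in the exact proof, it suffices to set
\begin{align*}
    C \;=\; \frac{1}{1-2\epsilon}\Bigl(\log(L+n^{2}-1) + \log\tfrac{1+\eta}{1-\eta} + \Theta(1)\Bigr),
\end{align*}
which is still $O(\log n)$ for any fixed $\eta \in (0,1)$, and is in fact $O(\log n + \log\frac{1}{1-\eta})$ more generally. Because $C$ only affects the magnitude of the query vectors Bob constructs (not the memory footprint of $\mathcal{A}$), the reduction carries through: Bob recovers every $x_{i_l j_l}$ from $\mathcal{O}_{l,j_l}$ by thresholding at $\tfrac{1}{2}((1+\eta)\delta + (1-\eta)\Delta)$, and a union bound over the $n^{2}$ coordinates against both the JL event (from Lemma~\ref{lem:inner-product-jl-for-all}) and the $1/10$ algorithmic failure probability gives overall success at least $3/5$ for $n$ large enough.

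Finally, I would invoke Lemma~\ref{thm:index-multi} exactly as before: the $S$ memory bits that Alice ships to Bob constitute a valid one-way protocol for \textsc{Multi-Index} on input length $L \cdot d = \Theta(n^{2}d)$ with $n^{2}$ queried coordinates, hence $S = \Omega(n^{2}d)$. The main obstacle is essentially bookkeeping rather than conceptual: one must make sure that the enlarged $C$ (and the resulting larger softmax logits) do not invalidate the JL-based orthonormality guarantee, which is fine because the JL error $\epsilon$ is a fixed constant independent of $C$, and that the multiplicative-approximation regime does not let $(1+\eta)\delta$ creep above $(1-\eta)\Delta$ for any $\eta < 1$; the exponential separation in $C$ absorbs both effects, so the slack is comfortable. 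Everything else is a direct reuse of the exact-case argument.
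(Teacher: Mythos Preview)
Your proposal is correct and follows essentially the same approach as the paper: reuse the exact-case reduction, observe that Bob's approximate reading satisfies $\mathcal{O}_{l,j_l}\le(1+\eta)\delta$ in Case~0 and $\mathcal{O}_{l,j_l}\ge(1-\eta)\Delta$ in Case~1, and enlarge $C$ so that $(1+\eta)\delta<(1-\eta)\Delta$. Your write-up is actually more detailed than the paper's, which simply records the separation condition, notes that it yields $C=\Omega(\ln n - \ln\tfrac{1-\eta}{1+\eta})$ with $\epsilon=0.1$, and omits the rest for brevity.
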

\begin{proof}
Our reduction follows the same approach as before, except that Bob now uses $\mathcal{O}$ to distinguish between the cases $x_{i_l j_l} = 0$ and $x_{i_l j_l} = 1$. Specifically, when $x_{i_l j_l} = 0$, we have $\mathcal{O}_{j_l} \leq (1+\eta)\delta$, whereas for $x_{i_l j_l} = 1$, it holds that $\mathcal{O}_{j_l} \geq (1-\eta)\Delta$, where $\delta$ and $\Delta$ are as defined in Theorem \ref{thm:space_lower_bound_precise:formal}.  

To ensure distinguishability, we require $(1+\eta)\delta < (1-\eta)\Delta$, which simplifies to $\delta < \frac{1-\eta}{1+\eta} \Delta$. This leads to the bound $C = \Omega (\ln n - \ln \frac{1-\eta}{1+\eta})$, accounting for $\epsilon = 0.1$. The proof is similar to the one above, and we
omit it for brevity.
\end{proof}

\section{Conclusion}\label{sec:conclusion}
This work establishes foundational theoretical insights into the memory complexity of KV-cache compression for Visual Autoregressive transformers. We rigorously formalize the KV-cache compression problem and prove that any attention-based mechanism for sequential visual token generation inherently requires $\Omega(n^2d)$memory under standard architectural assumptions, where $n$ denotes the height and width of the last scale token map generated by Visual Autoregressive transformer and 
$d$ is the embedding dimension. This result demonstrates the impossibility of achieving sub-quadratic memory consumption without introducing additional structural constraints or approximations.

\ifdefined\isarxiv
\bibliographystyle{alpha}
\bibliography{ref}
\else
\bibliographystyle{iclr2026_conference}
\bibliography{ref}
\fi






\end{document}